\newcommand{\beginsupplement}{%
        \setcounter{table}{0}
        \renewcommand{\thetable}{S\arabic{table}}%
        \setcounter{figure}{0}
        \renewcommand{\thefigure}{S\arabic{figure}}%
     }
\title{Adversarial Examples from Dimensional Invariance}
\author{ \href{https://orcid.org/0000-0003-1661-4579}{\includegraphics[scale=0.06]{orcid.pdf}\hspace{1mm}Benjamin L. Badger}\thanks{The author would like to thank Guidehouse for support during the research and writing of this paper.  Code for this work may be found on \url{https://github.com/blbadger/adversarial-theory}} \\
	Guidehouse \\
	1676 International Dr, McLean, VA 22102 \\
	\texttt{bbadger@guidehouse.com} \\
}
\date{}
\begin{document}
\maketitle

\begin{abstract}
    Adversarial examples have been found for various deep as well as shallow learning models, and have at various times been suggested to be either fixable model-specific bugs, or else inherent dataset feature, or both. We present theoretical and empirical results to show that adversarial examples are approximate discontinuities resulting from models that specify approximately bijective maps $f: \Bbb R^n \to \Bbb R^m; n \neq m$ over their inputs, and this discontinuity follows from the topological invariance of dimension.  
\end{abstract}


\section{Introduction}

    Adversarial examples were first noted in the context of deep vision models around a decade ago \citep{szegedy2014intriguing}, although similar observations were made for shallow text-based models long before that \citep{wittel2004attacking}.  A number of variations on the adversarial example concept have been described, but for this work we define adversarial examples solely in the context of classification problems, as inputs that are sufficiently `close' by some metric (typically $L^1$ or $L^2$) but which differ drastically in the model's output.

    Adversarial examples have been observed and considered many times previously \citep{shafahi2020adversarial, xiao2019generating, alzantot2018generating}, with conclusions ranging from the idea that adversarial examples are the result of excessive linearity \citep{goodfellow2015explaining} to the notion that adversarial examples implicit in particular datasets regardless of the model used \citep{ilyas2019adversarial}.  In one sense, there is nothing particularly special about adversarial examples being that any classification algorithm with that is unable to sufficiently separate classes will be expected to mistake inputs of one class for another. But the presence of adversarial examples even for models that generalize well (and therefore have presumably learned to separate example classes sufficiently) is less easily accounted for, and the remarkable similarity between an input and its adversarial example further suggests that insufficient class separation alone is not a sufficient explanation for the adversarial example phenomenon.
    
    In this work we take an analytic perspective to shed new light on the phenomenon.  Considering deep learning models to be analytic functions of one ($n$-dimensional) variable $y = f(x)$, $f$ is termed `bijective' if it is one-to-one (injective) and onto (surjective), and are therefore invertible such that for any output $y$ we can recover a unique input $f^{-1}(y) = x$.  

    Deep learning models considered in this work are typical neural network configurations (with fully connected layers) describing continuous, differentiable functions.  It is important to note that even though these functions are strictly continuous, such models may learn to approximate discontinuous functions during training.  Regularization and normalization techniques commonly applied to models during training typically do not prevent the learning of approximate discontinuity.

\section{Deep Learning Classifiers are typically Bijective over their Inputs}
    
    In the context of classification, one typically desires functions that are non-injective maps $f: \Bbb R^n \to {0, 1, ..., m}$ (with many examples of a class being mapped to that class in question). In the case of standard formulations of deep learning, models may indeed be injective due to their requirement for differentiability, and map $f: \Bbb R^n \to \Bbb R^m$.

    Differentiability is an important characteristic because it allows for training via gradient descent, which in turn is remarkably successful in part because it is biased towards generalization when applied to high-dimensional model parameter spaces \citep{badger2022deep}.  A downside is that continuous, differentiable models (particularly those that are high-dimensional) are effectively injective over their inputs because for a finite dataset it is extremely unlikely that any two inputs will yield an identical output $y\in \Bbb R^m$.  This assumption may be empirically checked for trained models on common image classification datasets by computing the minimum pairwise output distance between all dataset inputs indexed by $i, j$ in dataset index $S$ given in Equation (\ref{eq1}).

    \begin{equation}
        d_m = \min || O(x_i, \theta) - O(x_j, \theta) ||_1 \; \forall i, j \in S : \; i \neq j
        \label{eq1}
    \end{equation}

    Table \ref{table1} gives the values obtained for a trained classification model applied to standard datasets.  Note that in every case $d_m > 0$ such that the function learned is effectively injective over the input set. 

    \begin{table}
    \centering
    \begin{tabular}{ |c|c|c|c|c| } 
     \hline
     Dataset & Fashion MNIST & MNIST & CIFAR10 & CIFAR100* \\
     \hline 
     \hline
     $d_m$ & $5.35 \times 10^0$ & $9.2 \times 10^{-1}$ & $8.5 \times 10^{-1}$ & $1.6 \times 10^{-2}$ \\ 
     \hline
    \end{tabular}
    \bigskip
    \caption{Trained classifier $d_m$ per dataset. *with duplicated inputs removed.}
    \label{table1}
    \end{table}
    
    Deep learning classification models are also surjective such that for every possible $y$ (within a given range) one can find at least one $x$ such that $f(x) = y$.  This results directly from the fact that the typical models may be decomposed into functions that are themselves are continuous and unbounded, with the exception of an output layer which is often softmax-transformed for classification tasks.

    Maps that are both injective and surjective (therefore bijective) are invertible, meaning that one can recover a unique input $x$ for any output $y$.  It may seem at first glance that the typical classification model cannot be bijective because it cannot be invertible, given that successive layers of such models are composed of non-invertible functions that do not specify unique inputs given one output.  

    It is almost trivial to see that with a dataset of infinite size there would be no adversarial examples, assuming sufficient model capacity.  This is because the model may simply learn the identity of all possible inputs during training, and assign the correct classification to each one.  Given a finite dataset $X$, however, a function may be 'effectively' invertible if for any output $y$ we can identify the corresponding $x \in X$ using approximate inversion techniques for all dataset and model pairs where $d_m > 0$. 

    To clarify, deep learning models are typically continuous, non-injective functions on all possible inputs.  But they are trained to minimize some objective function on a small subset of all possible inputs, such that all points in input space sufficiently near the input $x_i$ are represented by $x_i$ with respect to the behavior of the function describing the trained model.   

\section{Bijective Maps between different dimensions are Necessarily Discontinuous}

    We have seen that deep learning classification models are effectively bijective functions over their inputs.  In this section we will acquaint the reader with a proof that bijective functions $f: \Bbb R^n \to \Bbb R^m; n \neq m$ cannot be continuous.

    Here `continuous' is a functional property and is defined topologically as follows: in some metric space $(X, d)$ where $f$ maps to another metric space $(Y, d')$, the function $f$ is continuous if and only if for any $\epsilon > 0$,

    \begin{equation}
        \lvert b - a \rvert < \delta \implies \lvert f(b) - f(a) \rvert < \epsilon
    \end{equation}
    
    Where $\delta > 0$ is a distance in metric space $(X, d)$ and $\epsilon$ is a distance in metric space $(Y, d')$.  A discontinuous function is one where the above expression is not true for some pair $(a, b) \in X$ and an everywhere discontinuous function is one in which the above expression is not true for every pair $(a, b) \in X$.  
    
    \topsep=10pt
    \newtheorem{theorem}{Theorem}
    
    \begin{theorem}
    Bijective Maps $f: \Bbb R^n \to \Bbb R^m; n \neq m$ are Necessarily Discontinuous
    \end{theorem}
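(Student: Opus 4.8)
The plan is to recognize this statement as a packaging of Brouwer's invariance of domain (equivalently, the topological invariance of dimension) and to split the argument into the two cases $n > m$ and $n < m$, which must be treated separately. The key subtlety I would flag at the outset is that a continuous bijection need not be a homeomorphism --- its inverse may fail to be continuous --- so one cannot simply cite the fact that $\mathbb{R}^n$ and $\mathbb{R}^m$ are non-homeomorphic and stop. Because continuity is not preserved under inversion here, the two inequalities are genuinely asymmetric and each needs its own argument.

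For $n > m$, I would show that even injectivity already fails to coexist with continuity, so surjectivity is not needed. Suppose $f:\mathbb{R}^n\to\mathbb{R}^m$ were a continuous injection. Let $\iota:\mathbb{R}^m\hookrightarrow\mathbb{R}^n$ be the inclusion onto the slice $\mathbb{R}^m\times\{0\}$, and set $F=\iota\circ f:\mathbb{R}^n\to\mathbb{R}^n$, which is again continuous and injective. Invariance of domain then forces $F(\mathbb{R}^n)$ to be open in $\mathbb{R}^n$. But $F(\mathbb{R}^n)\subseteq \mathbb{R}^m\times\{0\}$, a proper linear subspace with empty interior, which is the contradiction.

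For $n < m$, injectivity alone is no obstruction (the inclusion of $\mathbb{R}^n$ into higher dimension is a continuous injection), so I would instead exploit surjectivity. Restricting $f$ to the closed ball $\bar B_k$ of radius $k$ gives a continuous bijection from a compact space onto its image in the Hausdorff space $\mathbb{R}^m$, hence a homeomorphism onto $f(\bar B_k)$; thus $f(\bar B_k)$ is a compact set of covering dimension $n$. Since any subset of $\mathbb{R}^m$ of dimension less than $m$ has empty interior, each $f(\bar B_k)$ is closed and nowhere dense. Surjectivity then gives $\mathbb{R}^m=\bigcup_k f(\bar B_k)$, exhibiting the complete metric space $\mathbb{R}^m$ as a countable union of nowhere dense sets and contradicting the Baire category theorem.

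The hard part is not the packaging above but the fact that both cases ultimately rest on invariance of domain (or, for the $n<m$ case, on the dimension-theoretic statement that a homeomorphic copy of $\bar B_k$ cannot contain an open $m$-ball), which is a genuinely deep result established via algebraic topology rather than by elementary means. The only other point demanding care is to track precisely where injectivity versus surjectivity is invoked, since the two cases rely on different halves of the bijectivity hypothesis.
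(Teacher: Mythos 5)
Your proof is correct but takes a genuinely different route from the paper's. The paper argues both directions with elementary geometric pictures adapted from Pierce: for $n>m$ an intersecting-curves argument (two crossing curves in the plane cannot both map injectively and continuously into the line, since the crossing point lands inside the image interval of one curve while the rest of the other curve must land outside it), and for $n<m$ a cut-point argument (a single point disconnects a line but no single point disconnects a square). You instead invoke invariance of domain for $n>m$ and a Baire-category-plus-covering-dimension argument for $n<m$. Your version is substantially more rigorous, and the subtlety you flag at the outset --- that a continuous bijection need not be a homeomorphism --- is precisely the weak point of the paper's $n<m$ case: the cut-point argument transfers disconnectedness only through a continuous inverse, so as written it rules out $f$ being a homeomorphism rather than $f$ being a continuous bijection. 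Your restriction to compact balls, where a continuous bijection onto a Hausdorff image \emph{is} a homeomorphism, followed by the Baire category contradiction, repairs exactly this. The trade-off is that the paper's argument is elementary and visual while yours rests on invariance of domain and on the fact that subsets of $\mathbb{R}^m$ of covering dimension less than $m$ have empty interior, both deep results that you explicitly acknowledge cannot be obtained by elementary means. Your bookkeeping of which half of bijectivity each case consumes (injectivity alone suffices for $n>m$; surjectivity is essential for $n<m$) is also a genuine sharpening absent from the paper.
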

    
    \begin{proof}
    Consider first the case of $f: \Bbb R^2 \to \Bbb R^1$, and for this case we will abbreviate a proof found in \citep{pierce2012introduction}. Suppose we have arbitrary two points on a two dimensional surface, called $a$ and $b$.  We can connect these points with an arbitrary curve, and now we choose two other points $c$ and $d$ on the surface and connect them with a curve that travels through the curve $ab$ as shown in Figure \ref{fig1}. All four points are mapped to a line, and in particular $a \to a'$, $b\to b'$ etc.

    \begin{figure}[h]
        \centering
        \includegraphics[width=0.75\textwidth]{Figures/discontinuous_proof.png}
        \caption{}
        \label{fig1}
    \end{figure}

    Now consider the intersection of $ab$ and $cd$.  This intersection lies between $a'$ and $b'$ because it is on $ab$.  But now note that all other points on $cd$ must lie outside $a'b'$ in order for this to be a one-to-one mapping.  Thus there is some number $\delta > 0$ that exists separating the intersection point from the rest of the mapping of $cd$, and therefore the mapping is not continuous.  To see that it is everywhere discontinuous, observe that any point on the plane may be this intersection point, which maps to a discontinous region of the line.  Therefore a one-to-one and onto mapping of a two dimensional plane to a one dimensional line is nowhere continuous.

    This argument may be generalized to all cases $f: \Bbb R^n \to \Bbb R^m, n > m$ by observing that one can enumerate the dimensions $a \in \{0, 1, ...,  n \}$ as $a_1, a_2, ..., a_n$ and the corresponding dimensions of $b \in \Bbb R^m$ as $b_1, b_2, ..., b_m$ and then simply choosing distinct dimensions $a_m, a_{m+1}$ from $\{n \}$ and the corresponding dimension $b_m$ from $ \{ m \}$ as the relevant dimensions, before applying the arguments detailed above.  

    One can make a slightly different argument to show that $f: \Bbb R^1 \to \Bbb R^2$ cannot in general be continuous if it is bijective. Suppose, without loss of generality, we have just such a continuous bijective function $y = f(x)$ such that two unique points on a line $x_1, x_2 \in x$ are mapped to two unique points $y_1, y_2 \in y$ on a unit square. Then due to the topological definition of the line, between $x_1, x_2$ there exists a cut point $x_c$ such that removal of this point from the line $x$ results in two disconnected lines, one containing $x_1$ and the other $x_2$.  But there is no single point $y_c$ that one can remove from a unit square in order to separate $y_1, y_2$, and therefore $f$ cannot be topologically continuous, and thus not bijective and continuous. As the function $f$ has already been defined to be bijective, it cannot therefore be continuous too.

    This last argument may also be generalized to all cases $f: \Bbb R^n \to \Bbb R^m, n < m$ as above, and therefore no bijective function $f: \Bbb R^n \to \Bbb R^m, n \neq m$ may be continuous.
    
    \end{proof} 

    Deep learning models are described by continuous functions (theoretically speaking, although when implemented on digital hardware they become discontinuous) and classification models are typically continuous and are non-injective over the set of all possible inputs.  But as these models behave as if they were injective over their training data, they may be thought of as approximating functions that are themselves injective.  Combined with the typically surjective nature of these models on their inputs, such models may be considered to be approximately bijective.

    This notion of approximation should not be confused with the $\epsilon$ - neighborhood idea used in analysis but is instead better ideated as a sliding scale: the better a model separates the outputs of all its inputs, the more it resembles (`approximates') a bijective function. 

\section{Empirical Approximate Discontinuity in Models}

    There are many deep learning models that do not map a higher-dimensional space to a lower-dimensional one.  A typical example of this type of model is the autoencoder, which may be described as functions $f:\Bbb R^n \to \Bbb R^m, \; n = m$.  Removing dimensional reduction removes the theoretical necessity of discontinuity, and therefore it is illuminating to see if autoencoders exhibit adversarial examples or not.  

    We focus on adversarial examples generated using the fast gradient sign method \citep{goodfellow2015explaining} given by Equation \ref{eq2} for convenience.  Here $\eta$ is a `learning rate' parameter and $\mathrm{sign}$ is a function $g: \Bbb R \to \{-1, 1 \}$ and $L(O, y)$ is a loss function on the output which is minimized during training.

    \begin{equation}
        x_a = x + \eta * \mathrm{sign} \; \nabla_x L(O(x, \theta), y) 
        \label{eq2}
    \end{equation}

    The fast gradient sign method finds a normalized vector by which the loss function is maximally increased. Consider first an overcomplete autoencoder $f$ defined as $O(x, \theta)$ where $x$ indicates the input and $\theta$ indicates the model configuration.  One measure of the $\epsilon$ - neighborhood expansion $e_a$ for a small adversarial change in $x$ is shown in Equation \ref{eq3}.

    \begin{equation}
        e_a = \frac{||O(x, \theta) - O(x_a, \theta)||_1}{|| x - x_a ||_1}
        \label{eq3}
    \end{equation}

    This may be compared with the $\epsilon$ - neighborhood expansion for a small random change in $x$ given in Equation \ref{eq4}, where $x' = x + \eta * \mathcal{N}(x, \mu, \sigma)$. 

    \begin{equation}
        e_n = \frac{|| O(x, \theta) - O(x', \theta)||_1}{|| x - x' ||_1}
        \label{eq4}
    \end{equation}

    To gain a measure of approximate discontinuity in the direction of the input gradient relative to a random direction, we can simply take the expansion ratio $r = e_a / e_n$ (\ref{eq5}).

    \begin{equation}
        r = \frac{||O(x, \theta) - O(x_a, \theta)||_1 * || x - x' ||_1}{|| O(x, \theta) - O(x', \theta)||_1 * || x - x_a ||_1}
        \label{eq5}
    \end{equation}

    Intuitively this ratio is the expansion of the input space in the direction of the adversarial example to the expansion of input space in a random direction. For any learning rate $\eta$ we can calculate the corresponding expansion ratio $r_{\eta}$.  For a true discontinuity, we expect for $r_{\eta} \to \infty$ as $\eta \to 0$ and indeed this is what is observed when we measured $r_m$ as shown in (\ref{eq5}) using discontinuous models (specifically models with dropout enabled).  For this work, the average $r$ value across hundreds of inputs is measured and plotted as a single line, and multiple experiments with different random initializations for $e_n$ are collected per plot.  We focus on the $\eta$ range that is pertinent to adversarial example generation for these models, which is typically $1 \times 10^{-1} < \eta < 1 \times 10^{-5}$ and note that for $\eta < 1 \times 10^{-7}$ we find numerous numerical instabilities.  

    \begin{figure}[h]
        \centering 
        \includegraphics[width=0.6\textwidth]{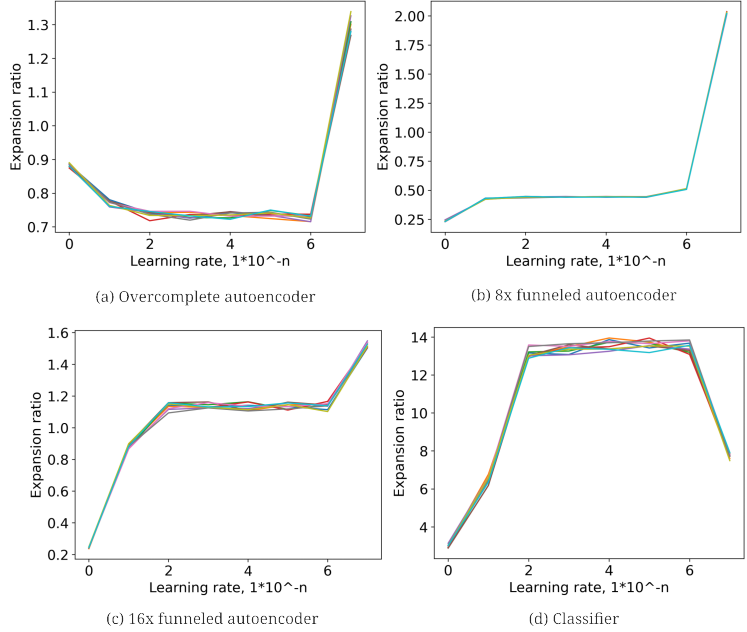}
        \caption{Approximate Discontinuity in Classifiers and GANs but not Autoencoders as $\eta \to 0$.}
        \label{fig2}
    \end{figure}

    An autoencoder represented as a single sub-function is one which maps $\Bbb R^n \to \Bbb R^n$ by definition, but these models may be composed of functions $f: \Bbb R^n \to \Bbb R^m; n > m$ followed by more functions $f: \Bbb R^n \to \Bbb R^m; n < m$.  As shown in Figure \ref{fig1}, the larger the ratio of input to smallest model layer dimension (see the Appendix for more details), the more the approximate discontinuity.  It should be noted that the overcomplete autoencoder presented in Figure \ref{fig1} does not approximate the identity function, and indeed is capable of denoising an input (Figure \ref{figs1}).  

    The process of training a classifier endows the model with the ability to distinguish between various inputs (by definition).  We observe that the training process typically results in a substantial increase in the nearest-neighbors output distance $d_m$ defined in (\ref{eq1}): for example, before training our model exhibits $d_m = 3.0 * 10^{-2}$ which is increased a hundred-fold during training.  We view this as showing that the trained model is two orders of magnitude more `approximately bijective' than the untrained one, and therefore we can expect the model to be more `approximately discontinuous'. Observing the expansion ratio $r_{\eta}$ as $\eta \to 0$ for trained and untrained models, this indeed this is what is found empirically (Figure \ref{figs2}).

\section{GANs moreso than Denoising Diffusion Inversion Autoencoders experience approximate discontinuity}

    The importance of the presence of approximate discontinuity in generative models with high compression ratios may be seen by comparing the presence of such discontinuities in trained generative adversarial networks (GANs) \citep{goodfellow2020generative} and denoising diffusion inversion models \citep{sohl2015deep, ho2020denoising}.  GANs are composed of two models, a generator $f: \Bbb R^n \to \Bbb R^m$ where typically $n << m$ and a discriminator, which maps $f: \Bbb R^m \to \Bbb R$.  Denoising diffusion inversion models are fundamentally denoising autoencoders that map $f: \Bbb R^n \to \Bbb R^n$, although most models are not composed exclusively of overcomplete transformations.  Our implementation of the denoising diffusion inversion model follows that of \citep{ho2020denoising, Wang2023Implementation} except that diffusion step encoding is provided by a continuous value from one input element rather than as a positional encoding over all input elements.

    \begin{figure}[h]
        \centering
        \includegraphics[width=0.6\textwidth]{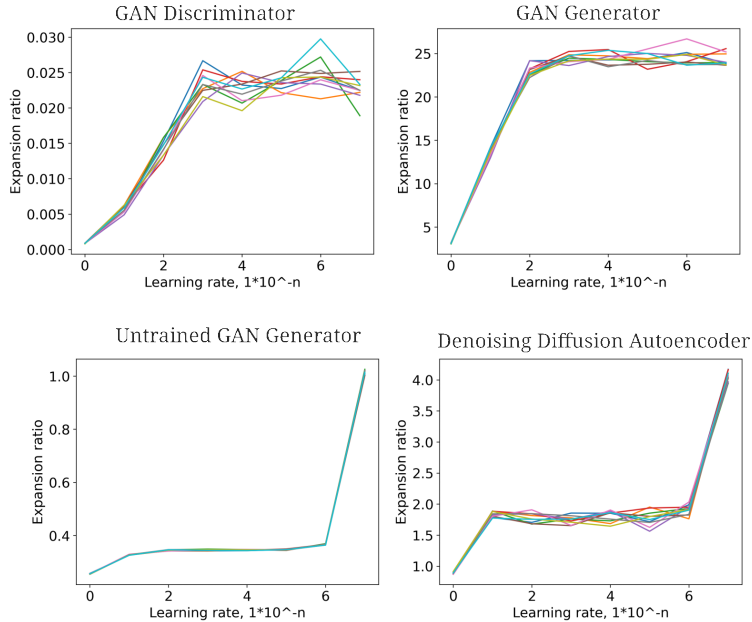}
        \caption{Greater approximate discontinuity in trained GANs than Denoising Diffusion Autoencoders $\eta \to 0$.}
        \label{fig3}
    \end{figure}

    We compared fully connected versions of each model after training on the FashionMNIST dataset and and found that both GAN generator and discriminator exhibit large growth in $r_{\eta}$ values indicating approximate discontinuity, but that the denoising diffusion autoencoder experiences very little growth in $r_{\eta}$ as $\eta \to 0$ (Figure \ref{fig3}).  Indeed the behavior of $r_{\eta}$ for the diffusion autoencoder closely resembles what was observed for the autoencoder in Figure \ref{fig2}.  

    For this version of a GAN, the generator is a function $f: \Bbb R^{100} \to \Bbb R^{768}$ such that there is approximately the same overall compression between smallest versus largest layer in the GAN generator as in the 8x-compressed autoencoder detailed in the last section. The dramatic difference in approximate continuity between these models suggests that the training protocol given to these models is what is responsible for this change, and sure enough we find that the GAN generator experiences very little approximate discontinuity before training (Figure \ref{fig3}).  We conclude that the GAN training protocol itself is responsible for the discontinuity suffered by the generator, which may in part explain why GAN training is so unstable relative to denoising diffusion autoencoder training.

\section{Conclusion}

    In this work we see that deep learning models tend to form approximately discontinuous functions (manifesting as adversarial examples) if they are composed of layers that are of different dimension than subsequent layers.  This observation may explain why the most successful generative models in use today are functions that map $f: \Bbb R^n \to \Bbb R^n$ without severe bottlenecks, being that there is no significant contraction in most current autoregressive language models \citep{vaswani2017attention, workshop2023bloom} and denoising diffusion models.  This work also provides a further explanation for the presence of adversarial examples in models with decreasing width \citep{daniely2020most} and the clear advantage of performing unsupervised pre-training for classification models.
    
\bibliographystyle{unsrtnat}
\bibliography{references}  

\beginsupplement
\section{Appendix}

The hidden layer widths for the 16x funneled autoenocoder are as follows:

$[2000, 1000, 500, 250, 166, 125, 166, 250, 500, 1000, 2000]$ 

and for the 8x funneled autoencoder are 

$[2000, 1000, 500, 250, 500, 1000, 2000]$ 

with no residual connections. Code and trained parameters for all models in this work is available at \url{https://github.com/blbadger/adversarial-theory}

\begin{figure}[h]
        \centering
        \includegraphics[width=0.7\textwidth]{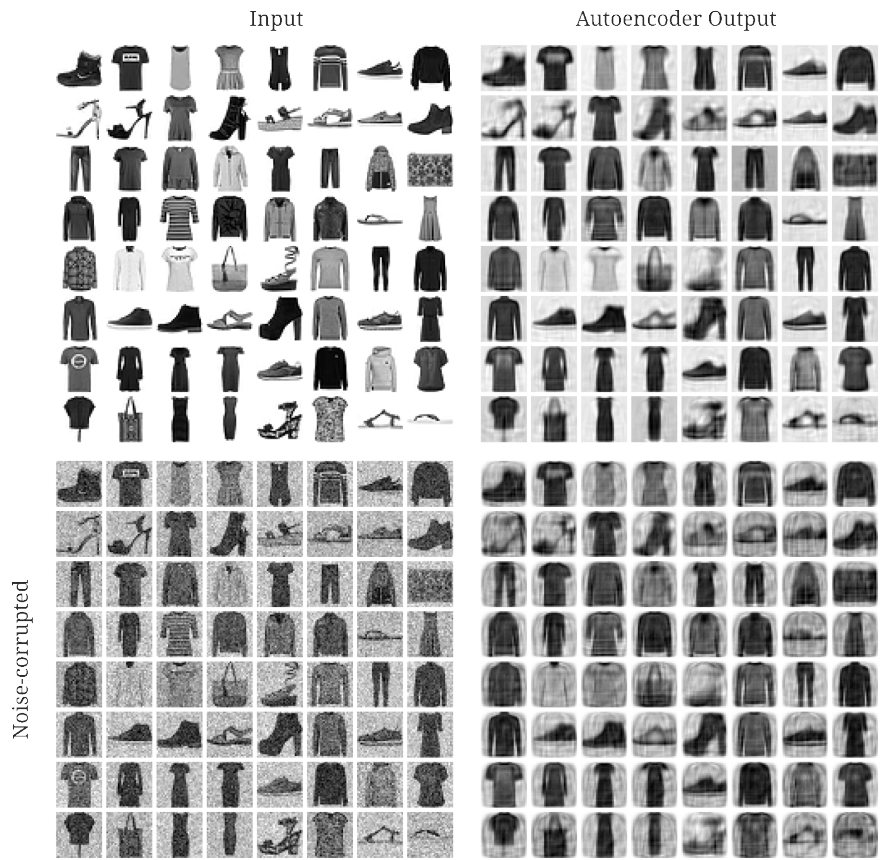}
        \caption{Overcomplete autoencoders are capable of denoising and do not learn the identity function (note the distinctly rounded edges in the autoencoder outputs present in the lower right panel).}
        \label{figs1}
    \end{figure}

\begin{figure}[h]
        \centering
        \includegraphics[width=0.4\textwidth]{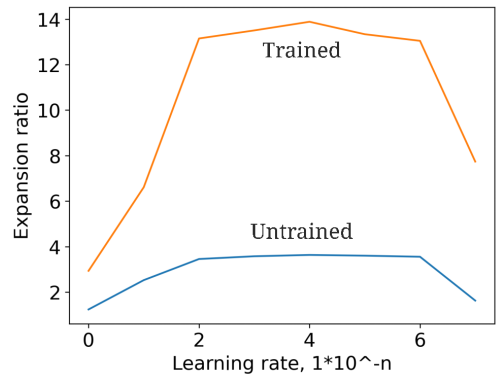}
        \caption{Training increases approximate discontinuity in the classifier.}
        \label{figs2}
    \end{figure}

\end{document}